\newtheorem{remark}{Remark}
\newtheorem{proposition}{Proposition}
\DeclareMathOperator{\var}{var}
\title{Fraternal Dropout}
\author{
Konrad \.Zo\l{}na$^{1,2,}$\thanks{\texttt{konrad.zolna@gmail.com}}  , Devansh Arpit$^2$, Dendi Suhubdy$^2$ \& Yoshua Bengio$^{2,3}$
\\\\
$^1$Jagiellonian University\\
$^2$MILA, Universit\'e de Montr\'eal\\
$^3$CIFAR Senior Fellow}
\begin{document}

\maketitle

\begin{abstract}
Recurrent neural networks (RNNs) form an important class of architectures among neural networks useful for language modeling and sequential prediction. However, optimizing RNNs is known to be harder compared to feed-forward neural networks. A number of techniques have been proposed in literature to address this problem. In this paper we propose a simple technique called \emph{fraternal dropout} that takes advantage of dropout to achieve this goal. Specifically, we propose to train two identical copies of an RNN (that share parameters) with different dropout masks while minimizing the difference between their (pre-softmax) predictions. In this way our regularization encourages the representations of RNNs to be invariant to dropout mask, thus being robust. We show that our regularization term is upper bounded by the expectation-linear dropout objective which has been shown to address the gap due to the difference between the train and inference phases of dropout. We evaluate our model and achieve state-of-the-art results in sequence modeling tasks on two benchmark datasets -- Penn Treebank and Wikitext-2. We also show that our approach leads to performance improvement by a significant margin in image captioning (Microsoft COCO) and semi-supervised (CIFAR-10) tasks.
\end{abstract}

\section{Introduction}
Recurrent neural networks (RNNs) like long short-term memory (LSTM; \cite{lstm}) networks and gated recurrent unit (GRU; \cite{gru}) are popular architectures for sequence modeling tasks like language generation, translation, speech synthesis, and machine comprehension. However, they are harder to optimize compared to feed-forward networks due to challenges like variable length input sequences, repeated application of the same transition operator at each time step, and largely-dense embedding matrix that depends on the vocabulary size. Due to these optimization challenges in RNNs, the application of batch normalization and its variants (layer normalization, recurrent batch normalization, recurrent normalization propagation) have not been as successful as their counterparts in feed-forward networks \citep{laurent2016batch}, although they do considerably provide performance gains. Similarly, naive application of dropout \citep{srivastava2014dropout} has been shown to be ineffective in RNNs \citep{zaremba2014recurrent}. Therefore, regularization techniques for RNNs is an active area of research.

To address these challenges, \citet{zaremba2014recurrent} proposed to apply dropout only to the non-recurrent connections in multi-layer RNNs. Variational dropout (\cite{vdropout}) uses the same dropout mask throughout a sequence during training. DropConnect \citep{wan2013regularization} applies the dropout operation on the weight matrices. Zoneout (\cite{zoneout}), in a similar spirit with dropout, randomly chooses to use the previous time step hidden state instead of using the current one. Similarly as a substitute for batch normalization, layer normalization normalizes the hidden units within each sample to have zero mean and unit standard deviation. Recurrent batch normalization applies batch normalization but with unshared mini-batch statistics for each time step \citep{DBLP:journals/corr/CooijmansBLC16}.

\citet{merity2017regularizing} and \citet{ar_tar} on the other hand show that activity regularization (AR) and temporal activation regularization (TAR)\footnote{TAR and Zoneout are similar in their motivations because both leads to adjacent time step hidden states to be close on average.} are also effective methods for regularizing LSTMs. Another more recent way of regularizing RNNs, that is similar in spirit to the approach we take, involves minimizing the difference between the hidden states of the original and the auxiliary network \cite{serdyuk2017twinnet}.

In this paper we propose a simple regularization based on dropout that we call \emph{fraternal dropout}, where we minimize an equally weighted sum of prediction losses from two identical copies of the same LSTM with different dropout masks, and add as a regularization the $\ell^2$ difference between the predictions (pre-softmax) of the two networks. We analytically show that our regularization objective is equivalent to minimizing the variance in predictions from different i.i.d. dropout masks; thus encouraging the predictions to be invariant to dropout masks. We also discuss how our regularization is related to expectation linear dropout \cite{ma2016dropout}, $\Pi$-model \cite{laine2016temporal} and activity regularization \cite{ar_tar}, and empirically show that our method provides non-trivial gains over these related methods which we explain furthermore in our ablation study (Section \ref{sec_ablation}). 

\section{Fraternal dropout}
\label{sec_fd}

Dropout is a powerful regularization for neural networks. It is usually more effective on densely connected layers because they suffer more from overfitting compared with convolution layers where the parameters are shared. For this reason dropout is an important regularization for RNNs. However, dropout has a gap between its training and inference phase since the latter phase assumes linear activations to correct for the factor by which the expected value of each activation would be different \cite{ma2016dropout}. In addition, the prediction of models with dropout generally vary with different dropout mask. However, the desirable property in such cases would be to have final predictions be invariant to dropout masks.

As such, the idea behind \emph{fraternal dropout} is to train a neural network model in a way that encourages the variance in predictions under different dropout masks to be as small as possible. Specifically, consider we have an RNN model denoted by $\mathcal{M}(\theta)$ that takes as input $\mathbf{X}$, where $\theta$ denotes the model parameters. Let $\mathbf{p}^t(\mathbf{z}^t, s_i^t; \theta) \in \mathbb{R}^m$ be the prediction of the model for input sample $\mathbf{X}$ at time $t$, for dropout mask $s_i^t$ and current input $\mathbf{z}^t$, where $\mathbf{z}^t$ is a function of $\mathbf{X}$ and the hidden states corresponding to the previous time steps. Similarly, let $\ell^t(\mathbf{p}^t(\mathbf{z}^{t}, s_i^t; \theta),\mathbf{Y})$ be the corresponding $t^{th}$ time step loss value for the overall input-target sample pair $(\mathbf{X}, \mathbf{Y})$.

Then in \emph{fraternal dropout}, we simultaneously feed-forward the input sample $\mathbf{X}$ through two identical copies of the RNN that share the same parameters $\theta$ but with different dropout masks $s_i^t$ and $s_j^t$ at each time step $t$. This yields two loss values at each time step $t$ given by $\ell^t(\mathbf{p}^t(\mathbf{z}^t, s_i^t; \theta),\mathbf{Y})$, and $\ell^t(\mathbf{p}^t(\mathbf{z}^t, s_j^t; \theta),\mathbf{Y})$. Then the overall loss function of \emph{fraternal dropout} is given by,
\begin{align}
\ell_{FD}(\mathbf{X}, \mathbf{Y}) = \sum_{t=1}^T \frac{1}{2} \left( \ell^t(\mathbf{p}^t(\mathbf{z}^{t}, s_i^t; \theta),\mathbf{Y})  + \ell^t(\mathbf{p}^t(\mathbf{z}^{t}, s_j^t; \theta),\mathbf{Y})  \right) + \frac{\kappa}{mT}  \sum_{t=1}^T  \mathcal{R}_{FD}(\mathbf{z}^{t}; \theta)
\end{align}
where $\kappa$ is the regularization coefficient, $ m$ is the dimensions of $\mathbf{p}^t(\mathbf{z}^{t}, s_i^t; \theta)$ and $\mathcal{R}_{FD}(\mathbf{z}^{t}; \theta)$ is the \emph{fraternal dropout} regularization given by,
\begin{align}
\mathcal{R}_{FD}(\mathbf{z}^{t}; \theta) := \mathbb{E}_{s_i^t, s_j^t} \left[ \lVert \mathbf{p}^t(\mathbf{z}^{t}, s_i^t; \theta) - \mathbf{p}^t(\mathbf{z}^{t}, s_j^t; \theta) \rVert^2_2 \right].
\end{align}
We use Monte Carlo sampling to approximate $\mathcal{R}_{FD}(\mathbf{z}^{t}; \theta)$ where $\mathbf{p}^t(\mathbf{z}^{t}, s_i^t; \theta)$ and $\mathbf{p}^t(\mathbf{z}^{t}, s_j^t; \theta)$ are the same as the one used to calculate $\ell^t$ values. Hence, the additional computation is negligible.

We note that the regularization term of our objective is equivalent to minimizing the variance in the prediction function with different dropout masks as shown below (proof in the appendix).
\begin{remark}
\label{remark_var}
Let $s_i^t$ and $s_j^t$ be i.i.d. dropout masks and $\mathbf{p}^t(\mathbf{z}^t, s_i^t; \theta) \in \mathbb{R}^m$ be the prediction function as described above. Then, 
\begin{align}
\mathcal{R}_{FD}(\mathbf{z}^{t}; \theta) &= \mathbb{E}_{s_i^t, s_j^t} \left[ \lVert \mathbf{p}^t(\mathbf{z}^t, s_i^t; \theta) - \mathbf{p}^t(\mathbf{z}^t, s_j^t; \theta) \rVert^2_2 \right] = 2\sum_{q=1}^{m} \var_{s_i^t}( {p}_q^t(\mathbf{z}^t, s_i^t; \theta) ).
\end{align}
\end{remark}

Note that a generalization of our approach would be to minimize the difference between the predictions of the two networks with different data/model augmentations. However, in this paper we focus on using different dropout masks and experiment mainly with RNNs\footnote{The reasons of our focus on RNNs are described in the appendix.}.

\section{Related Work}

\subsection{Relation to Expectation Linear Dropout (ELD)}

\citet{ma2016dropout} analytically showed that the expected error (over samples) between a model's expected prediction over all dropout masks, and the prediction using the average mask, is upper bounded. Based on this result, they propose to explicitly minimize the difference (we have adapted their regularization to our notations),
\begin{align}
\mathcal{R}_{ELD}(\mathbf{z}^{t}; \theta) = \lVert \mathbb{E}_{s} \left[ \mathbf{p}^t(\mathbf{z}^{t}, s; \theta) \right] - \mathbf{p}^t(\mathbf{z}^{t}, \mathbb{E}_{s}[{s}]; \theta) \rVert_2 
\end{align}
where $s$ is the dropout mask. However, due to feasibility consideration, they instead propose to use the following regularization in practice,
\begin{align}
\mathcal{\tilde{R}}_{ELD}(\mathbf{z}^{t}; \theta) = \mathbb{E}_{s_i} \left[ \lVert   \mathbf{p}^t(\mathbf{z}^{t}, s_i; \theta) - \mathbf{p}^t(\mathbf{z}^{t}, \mathbb{E}_{s}[{s}]; \theta) \rVert_2^2   \right].
\end{align}
Specifically, this is achieved by feed-forwarding the input twice through the network, with and without dropout mask, and minimizing the main network loss (with dropout) along with the regularization term specified above (but without back-propagating the gradients through the network without dropout). The goal of \citet{ma2016dropout} is to minimize the network loss along with the expected difference between the prediction from individual dropout mask and the prediction from the expected dropout mask. We note that our regularization objective is upper bounded by the expectation-linear dropout regularization as shown below (proof in the appendix).
\begin{proposition}
\label{prop_eld}
$\mathcal{R}_{FD}(\mathbf{z}^{t}; \theta) \leq 4 \mathcal{\tilde{R}}_{ELD}(\mathbf{z}^{t}; \theta)$.
\end{proposition}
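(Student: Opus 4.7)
The plan is to use the standard add-and-subtract trick to introduce the ``average mask'' prediction $\mathbf{p}^t(\mathbf{z}^{t}, \mathbb{E}_s[s]; \theta)$, which is the reference point appearing in $\mathcal{\tilde{R}}_{ELD}$, and then apply the elementary inequality $\|a - b\|_2^2 \leq 2\|a\|_2^2 + 2\|b\|_2^2$ (a direct consequence of the parallelogram law or of expanding $\|a-b\|^2 + \|a+b\|^2$).

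Concretely, I would first denote $\mathbf{q} := \mathbf{p}^t(\mathbf{z}^{t}, \mathbb{E}_s[s]; \theta)$, and write
\begin{align*}
\mathbf{p}^t(\mathbf{z}^{t}, s_i^t; \theta) - \mathbf{p}^t(\mathbf{z}^{t}, s_j^t; \theta) = \bigl(\mathbf{p}^t(\mathbf{z}^{t}, s_i^t; \theta) - \mathbf{q}\bigr) - \bigl(\mathbf{p}^t(\mathbf{z}^{t}, s_j^t; \theta) - \mathbf{q}\bigr).
\end{align*}
Squaring the $\ell^2$ norm and applying $\|a-b\|_2^2 \leq 2\|a\|_2^2 + 2\|b\|_2^2$ pointwise yields
\begin{align*}
\lVert \mathbf{p}^t(\mathbf{z}^{t}, s_i^t; \theta) - \mathbf{p}^t(\mathbf{z}^{t}, s_j^t; \theta) \rVert_2^2 \leq 2 \lVert \mathbf{p}^t(\mathbf{z}^{t}, s_i^t; \theta) - \mathbf{q} \rVert_2^2 + 2 \lVert \mathbf{p}^t(\mathbf{z}^{t}, s_j^t; \theta) - \mathbf{q} \rVert_2^2.
\end{align*}

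The final step is to take the expectation over the i.i.d. dropout masks $s_i^t, s_j^t$. By linearity of expectation and the fact that $s_i^t$ and $s_j^t$ have the same marginal distribution as the single mask $s$ appearing in $\mathcal{\tilde{R}}_{ELD}$, each of the two terms on the right-hand side contributes exactly $2 \mathcal{\tilde{R}}_{ELD}(\mathbf{z}^{t}; \theta)$, giving the claimed bound $\mathcal{R}_{FD}(\mathbf{z}^{t}; \theta) \leq 4 \mathcal{\tilde{R}}_{ELD}(\mathbf{z}^{t}; \theta)$.

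There is no real obstacle here: the only subtle point worth stating explicitly is that $s_i^t$ and $s_j^t$ are i.i.d.\ with the same law as $s$, which is what lets the two terms collapse into the same ELD regularizer. No convexity or Jensen-type argument is needed since $\mathbf{q}$ is a single fixed reference point (not an expectation of predictions), so we do not even have to move an expectation inside or outside of $\mathbf{p}^t$.
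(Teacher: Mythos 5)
Your proof is correct and follows essentially the same route as the paper's: both insert the average-mask prediction $\mathbf{p}^t(\mathbf{z}^t, \mathbb{E}_s[s];\theta)$ by adding and subtracting, bound the squared norm of the resulting sum of two terms by twice the sum of their squared norms, and use that $s_i^t, s_j^t$ are i.i.d.\ to identify each expectation with $\mathcal{\tilde{R}}_{ELD}$. The only cosmetic difference is that the paper phrases the elementary inequality $\lVert a - b\rVert_2^2 \leq 2\lVert a\rVert_2^2 + 2\lVert b\rVert_2^2$ as Jensen's inequality applied to the convex function $\lVert\cdot\rVert_2^2$ at the midpoint of $a$ and $-b$, which is the same estimate.
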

This result shows that minimizing the ELD objective indirectly minimizes our regularization term.
Finally as indicated above, they apply the target loss only on the network with dropout. In fact, in our own ablation studies (see Section \ref{sec_ablation}) we find that back-propagating target loss through the network (without dropout) makes optimizing the model harder. However, in our setting, simultaneously back-propagating target loss through both networks yields both performance gain as well as convergence gain. We believe convergence is faster for our regularization because network weights are more likely to get target based updates from back-propagation in our case. This is especially true for weight dropout \citep{wan2013regularization} since in this case dropped weights do not get updated in the training iteration.

\subsection{Relation to \texorpdfstring{$\Pi$}{TEXT}-model}

\citet{laine2016temporal} propose $\Pi$-model with the goal of improving performance on classification tasks in the semi-supervised setting. They propose a model similar to ours (considering the equivalent deep feed-forward version of our model) except they apply target loss only on one of the networks and use time-dependent weighting function $\omega(t)$ (while we use constant $\frac{\kappa}{mT}$). The intuition in their case is to leverage unlabeled data by using them to minimize the difference in prediction between the two copies of the network with different dropout masks. Further, they also test their model in the supervised setting but fail to explain the improvements they obtain by using this regularization.

We note that in our case we analytically show that minimizing our regularizer (also used in $\Pi$-model) is equivalent to minimizing the variance in the model predictions (Remark \ref{remark_var}). Furthermore, we also show the relation of our regularizer to expectation linear dropout (Proposition \ref{prop_eld}). In Section \ref{sec_ablation}, we study the effects of target based loss on both networks, which is not used in the $\Pi$-model. We find that applying target loss on both the networks leads to significantly faster convergence. Finally, we bring to attention that temporal embedding (another model proposed by \citet{laine2016temporal}, claimed to be a better version of $\Pi$-model for semi-supervised, learning) is intractable in natural language processing applications because storing averaged predictions over all of the time steps would be memory exhaustive (since predictions are usually huge - tens of thousands values). On a final note, we argue that in the supervised case, using a time-dependent weighting function $\omega(t)$ instead of a constant value $\frac{\kappa}{mT}$ is not needed. Since the ground truth labels are known, we have not observed the problem mentioned by \citet{laine2016temporal}, that the network gets stuck in a degenerate solution when $\omega(t)$ is too large in earlier epochs of training. We note that it is much easier to search for an optimal constant value, which is true in our case, as opposed to tuning the time-dependent function.

Similarity to $\Pi$-model makes our method related to other semi-supervised works, mainly \cite{ladder} and \cite{sajjadi2016regularization}. Since semi-supervised learning is not a primary focus of this paper, we refer to \cite{laine2016temporal} for more details.

We note that the idea of adding a penalty encouraging the representation to be similar for two different masks was previously implemented\footnote{The implementation is freely available as part of a LISA lab library, Pylearn2. For more details see \url{github.com/lisa-lab/pylearn2/blob/master/pylearn2/costs/dbm.py\#L1175} .} by the authors of a Multi-Prediction Deep Boltzmann Machines \citep{goodfellow2013multi}. Nevertheless, the idea is not discussed in their paper.

Another way to address the gap between the train and evaluation mode of dropout is to perform Monte Carlo sampling of masks and average the predictions during evaluation, and this has been used for feed-forward networks. We find that this technique does not work well for RNNs. The details of these experiments can be found in the appendix. 

\section{Experiments}

\subsection{Language Models} \label{subsec:elm}
In the case of language modeling we test our model\footnote{Our code is available at 
\ificlrfinal
\url{github.com/kondiz/fraternal-dropout}
\else
\url{github.com/double-blind-submission/fraternal-dropout}
\fi
.} on two benchmark datasets -- Penn Tree-bank (PTB) dataset \citep{marcus1993building} and WikiText-2 (WT2) dataset \citep{DBLP:journals/corr/MerityXBS16}. We use preprocessing as specified by \citet{mikolov2010recurrent} (for PTB corpus) and Moses tokenizer \citet{koehn2007moses} (for the WT2 dataset).

For both datasets we use the AWD-LSTM 3-layer architecture described in \citet{merity2017regularizing}\footnote{We used the official GitHub repository code for this paper \url{github.com/salesforce/awd-lstm-lm} .} which we call the baseline model. The number of parameters in the model used for PTB is 24 million as compared to 34 million in the case of WT2 because WT2 has a larger vocabulary size for which we use a larger embedding matrix. Apart from those differences, the architectures are identical. When we use fraternal dropout, we simply add our regularization on top of this baseline model.

\textbf{Word level Penn Treebank (PTB).} Influenced by \cite{melis2017state}, our goal here is to make sure that \emph{fraternal dropout} outperforms existing methods not simply because of extensive hyper-parameter grid search but rather due to its regularization effects. Hence, in our experiments we leave a vast majority of hyper-parameters used in the baseline model \citep{melis2017state} unchanged i.e. embedding and hidden states sizes, gradient clipping value, weight decay and the values used for all dropout layers (dropout on the word vectors, the output between LSTM layers, the output of the final LSTM, and embedding dropout). However, a few changes are necessary:
\begin{itemize}
\item the coefficients for AR and TAR needed to be altered because \emph{fraternal dropout} also affects RNNs activation (as explained in Subsection \ref{subsec:ar_tar}) -- we did not run grid search to obtain the best values but simply deactivated AR and TAR regularizers;
\item since \emph{fraternal dropout} needs twice as much memory, \textit{batch size} is halved so the model needs approximately the same amount of memory and hence fits on the same GPU.
\end{itemize}
The final change in hyper-parameters is to alter the \emph{non-monotone interval} $n$ used in non-monotonically triggered averaged SGD (NT-ASGD) optimizer \cite{polyak1992acceleration, mandt2017stochastic,melis2017state}. We run a grid search on $n \in \{5, 25, 40, 50, 60\}$ and obtain very similar results for the largest values (40, 50 and 60) in the candidate set. Hence, our model is trained longer using ordinary SGD optimizer as compared to the baseline model \citep{melis2017state}.

We evaluate our model using the perplexity metric and compare the results that we obtain against the existing state-of-the-art results. The results are reported in Table \ref{table:PTB24M}. Our approach achieves the state-of-the-art performance compared with existing benchmarks.

\begin{table}[t]
\vspace{-0.08cm}
\centering
\begin{tabular}{c | c c c} 
\textbf{Model} & \textbf{Parameters}  & \textbf{Validation} & \textbf{Test}\\
\hline
\cite{zaremba2014recurrent} - LSTM (medium) & 10M & 86.2 & 82.7\\
\cite{zaremba2014recurrent} - LSTM (large) & 24M & 82.2 & 78.4\\
\cite{vdropout} - Variational LSTM (medium) & 20M & 81.9 & 79.7\\
\cite{vdropout} - Variational LSTM (large) & 66M & 77.9 & 75.2\\
\cite{inan2016tying} - Variational LSTM & 51M & 71.1 & 68.5\\
\cite{inan2016tying} - Variational RHN & 24M & 68.1 & 66.0\\
\cite{zilly2016recurrent} - Variational RHN & 23M & 67.9 & 65.4\\
\cite{melis2017state} - 5-layer RHN & 24M & 64.8 & 62.2\\
\cite{melis2017state} - 4-layer skip connection LSTM & 24M & 60.9 & 58.3\\
\hline
\cite{merity2017regularizing} - AWD-LSTM 3-layer (baseline) & 24M & 60.0 & 57.3 \\
\hline
\emph{Fraternal dropout} + AWD-LSTM 3-layer & 24M & \textbf{58.9} & \textbf{56.8}\\
\end{tabular}
\caption{Perplexity on Penn Treebank word level language modeling task.}
\label{table:PTB24M}
\vspace{-0.08cm}
\end{table}

To confirm that the gains are robust to initialization, we run ten experiments for the baseline model with different seeds (without fine-tuning) for PTB dataset to compute confidence intervals. The average best validation perplexity is $60.64 \pm 0.15$ with the minimum value equals $60.33$. The same for test perplexity is $58.32 \pm 0.14$ and $58.05$, respectively. Our score ($59.8$ validation and $58.0$ test perplexity) beats ordinal dropout minimum values.

We also perform experiments using fraternal dropout with a grid search on all the hyper-parameters and find that it leads to further improvements in performance. The details of this experiment can be found in section \ref{FDvsELD}. 

\textbf{Word level WikiText-2 (WT2).} In the case of WikiText-2 language modeling task, we outperform the current state-of-the-art using the perplexity metric by a significant margin. Due to the lack of computational power, we run a single training procedure for \emph{fraternal dropout} on WT2 dataset because it is larger than PTB. In this experiment, we use the best hyper-parameters found for PTB dataset ($\kappa = 0.1$, non-monotone interval $n=60$ and halved batch size; the rest of the hyper-parameters are the same as described in \cite{melis2017state} for WT2). The final results are presented in Table \ref{table:WT234Mfnt}.

\begin{table}[t]
\vspace{-0.08cm}
\centering
\begin{tabular}{c | c c c} 
\textbf{Model} & \textbf{Parameters}  & \textbf{Validation} & \textbf{Test}\\
\hline
\cite{DBLP:journals/corr/MerityXBS16} - Variational LSTM + Zoneout & 20M & 108.7 & 100.9\\
\cite{DBLP:journals/corr/MerityXBS16} - Variational LSTM & 20M & 101.7 & 96.3\\
\cite{inan2016tying} - Variational LSTM & 28M & 91.5 & 87.0\\
\cite{melis2017state} - 5-layer RHN & 24M & 78.1 & 75.6\\
\cite{melis2017state} - 1-layer LSTM & 24M & 69.3 & 65.9\\
\cite{melis2017state} - 2-layer skip connection LSTM & 24M & 69.1 & 65.9\\
\hline
\cite{merity2017regularizing} - AWD-LSTM 3-layer (baseline) & 34M & 68.6 & 65.8\\
\hline
\emph{Fraternal dropout} + AWD-LSTM 3-layer & 34M & \textbf{66.8} & \textbf{64.1}\\
\end{tabular}
\caption{Perplexity on WikiText-2 word level language modeling task.}
\label{table:WT234Mfnt}
\vspace{-0.08cm}
\end{table}


\subsection{Image captioning}

We also apply \emph{fraternal dropout} on an image captioning task. We use the well-known show and tell model as a baseline\footnote{We used PyTorch implementation with default hyper-parameters from \url{github.com/ruotianluo/neuraltalk2.pytorch} .} \citep{DBLP:journals/corr/VinyalsTBE14}. We emphasize that in the image captioning task, the image encoder and sentence decoder architectures are usually learned together. Since we want to focus on the benefits of using \emph{fraternal dropout} in RNNs we use frozen pretrained ResNet-101 \citep{DBLP:journals/corr/HeZRS15} model as our image encoder. It means that our results are not directly comparable with other state-of-the-art methods, however we report results for the original methods so readers can see that our baseline performs well. The final results are presented in Table \ref{table:MSCOCO}.

\begin{table}[t]
\vspace{-0.08cm}
\centering
\begin{tabular}{c | c c c c} 
\textbf{Model} & \textbf{BLEU-1} & \textbf{BLEU-2} & \textbf{BLEU-3} & \textbf{BLEU-4}\\
\hline
Show and Tell \cite{DBLP:journals/corr/XuBKCCSZB15} & 66.6 & 46.1 & 32.9 & 24.6\\
\hline
Baseline & 68.8 & 50.8 & 36.1 & 25.6 \\
\emph{Fraternal dropout}, $\kappa = 0.015$ & \textbf{69.3} & 51.4 & 36.6 & 26.1\\
\emph{Fraternal dropout}, $\kappa = 0.005$ & \textbf{69.3} & \textbf{51.5} & \textbf{36.9} & \textbf{26.3}\\
\end{tabular}
\caption{BLEU scores for the Microsoft COCO image captioning task. Using \emph{fraternal dropout} is the only difference between models. The rest of hyper-parameters are the same.}
\label{table:MSCOCO}
\vspace{-0.08cm}
\end{table}

We argue that in this task smaller $\kappa$ values are optimal because the image captioning encoder is given all information in the beginning and hence the variance of consecutive predictions is smaller that in unconditioned natural language processing tasks. \emph{Fraternal dropout} may benefits here mainly due to averaging gradients for different mask and hence updating weights more frequently.



\section{Ablation Studies}
\label{sec_ablation}
In this section, the goal is to study existing methods closely related to ours -- expectation linear dropout \cite{ma2016dropout}, $\Pi$-model \cite{laine2016temporal} and activity regularization \cite{ar_tar}. All of our experiments for ablation studies, which apply a single layer LSTM, use the same hyper-parameters and model architecture\footnote{We use a batch size of 64, truncated back-propagation with 35 time steps, a constant zero state is provided as the initial state with probability 0.01 (similar to \cite{melis2017state}), SGD with learning rate 30 (no momentum) which is multiplied by 0.1 whenever validation performance does not improve ever during 20 epochs, weight dropout on the hidden to hidden matrix 0.5, dropout every word in a mini-batch with probability 0.1, embedding dropout 0.65, output dropout 0.4 (final value of LSTM), gradient clipping of 0.25, weight decay $1.2\times 10^{-6}$, input embedding size of 655, the input/output size of LSTM is the same as embedding size (655) and the embedding weights are tied \citep{inan2016tying, press2016using}.} as \citet{melis2017state}.

\subsection{Expectation-linear dropout (ELD)}\label{subsec:eld}

The relation with expectation-linear dropout \cite{ma2016dropout} has been discussed in Section \ref{sec_fd}. Here we perform experiments to study the difference in performance when using the ELD regularization versus our regularization (FD). In addition to ELD, we also study a modification (ELDM) of ELD which applies target loss to both copies of LSTMs in ELD similar to FD (notice in their case they only have dropout on one LSTM). Finally we also evaluate a baseline model without any of these regularizations. The learning dynamics curves are shown in Figure \ref{fig_pi_model}. Our regularization performs better in terms of convergence compared with other methods. In terms of generalization, we find that FD is similar to ELD, but baseline and ELDM are much worse. Interestingly, looking at the train and validation curves together, ELDM seems to be suffering from optimization problems.

\begin{figure}[t]
\vspace{-0.08cm}
  \centering
  \begin{minipage}[b]{0.48\textwidth}
    \includegraphics[width=\textwidth]{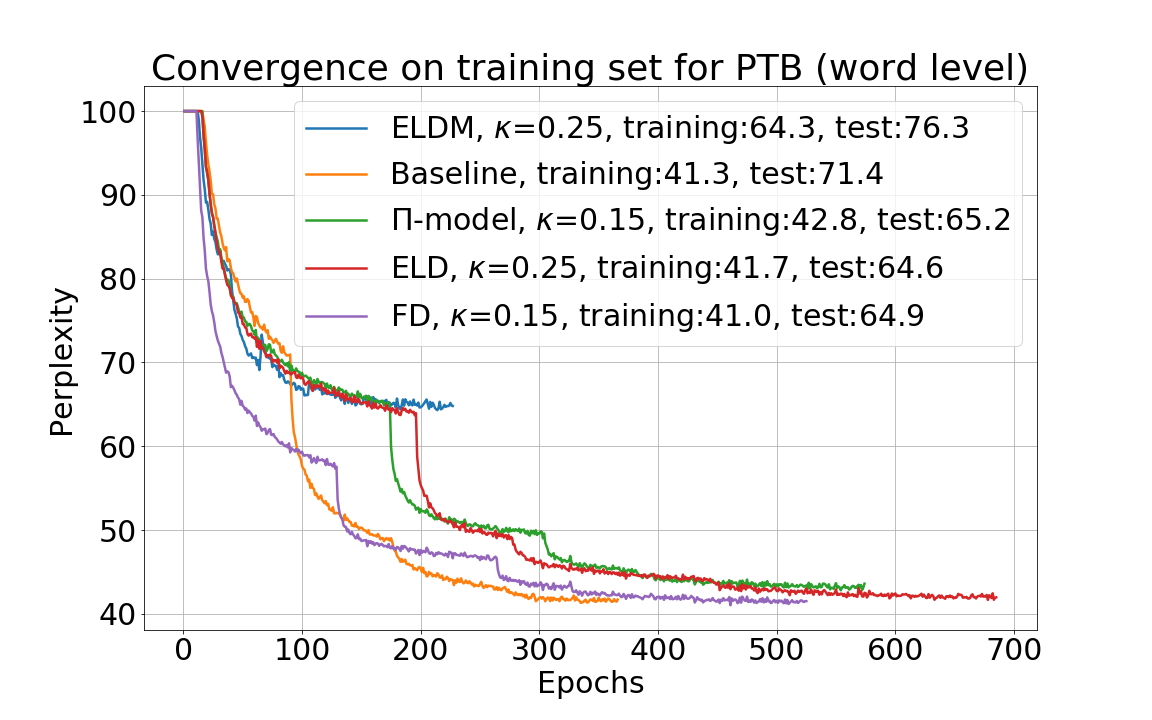}
  \end{minipage}
  \hfill
  \begin{minipage}[b]{0.48\textwidth}
    \includegraphics[width=\textwidth]{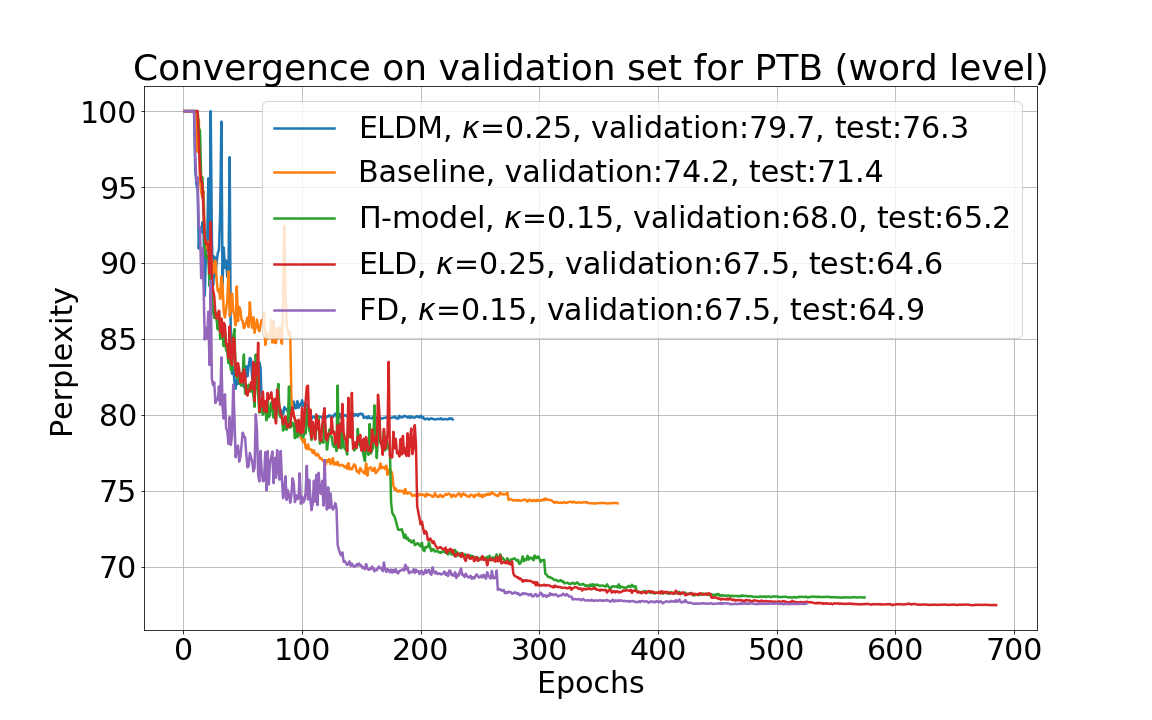}
  \end{minipage}
  \caption{Ablation study: Train (left) and validation (right) perplexity on PTB word level modeling with single layer LSTM (10M parameters). These curves study the learning dynamics of the baseline model, $\Pi$-model, Expectation-linear dropout (ELD), Expectation-linear dropout with modification (ELDM) and \emph{fraternal dropout} (FD, our algorithm). We find that FD converges faster than the regularizers in comparison, and generalizes at par.}
\vspace{-0.08cm}
\label{fig_pi_model}
\end{figure}

\subsection{\texorpdfstring{$\Pi$}{TEXT}-model}\label{semisupervised}
Since $\Pi$-model \cite{laine2016temporal} is similar to our algorithm (even though it is designed for semi-supervised learning in feed-forward networks), we study the difference in performance with $\Pi$-model\footnote{We use a constant function $\omega(t) = \frac{\kappa}{mT}$ as a coefficient for $\Pi$-model (similar to our regularization term). Hence, the focus of our experiment is to evaluate the difference in performance when target loss is back-propagated through one of the networks ($\Pi$-model) vs. both (ours). Additionally, we find that tuning a function instead of using a constant coefficient is infeasible.} both qualitatively and quantitatively to establish the advantage of our approach. First, we run both single layer LSTM and 3-layer AWD-LSTM on PTB task to check how their model compares with ours in the case of language modeling. The results are shown in Figure \ref{fig_pi_model} and \ref{FDconvergence}. We find that our model converges significantly faster than $\Pi$-model. We believe this happens because we back-propagate the target loss through both networks (in contrast to $\Pi$-model) that leads to weights getting updated using target-based gradients more often. 

\begin{figure}[t]
\vspace{-0.08cm}
  \centering
  \begin{minipage}[b]{0.48\textwidth}
    \includegraphics[width=\textwidth]{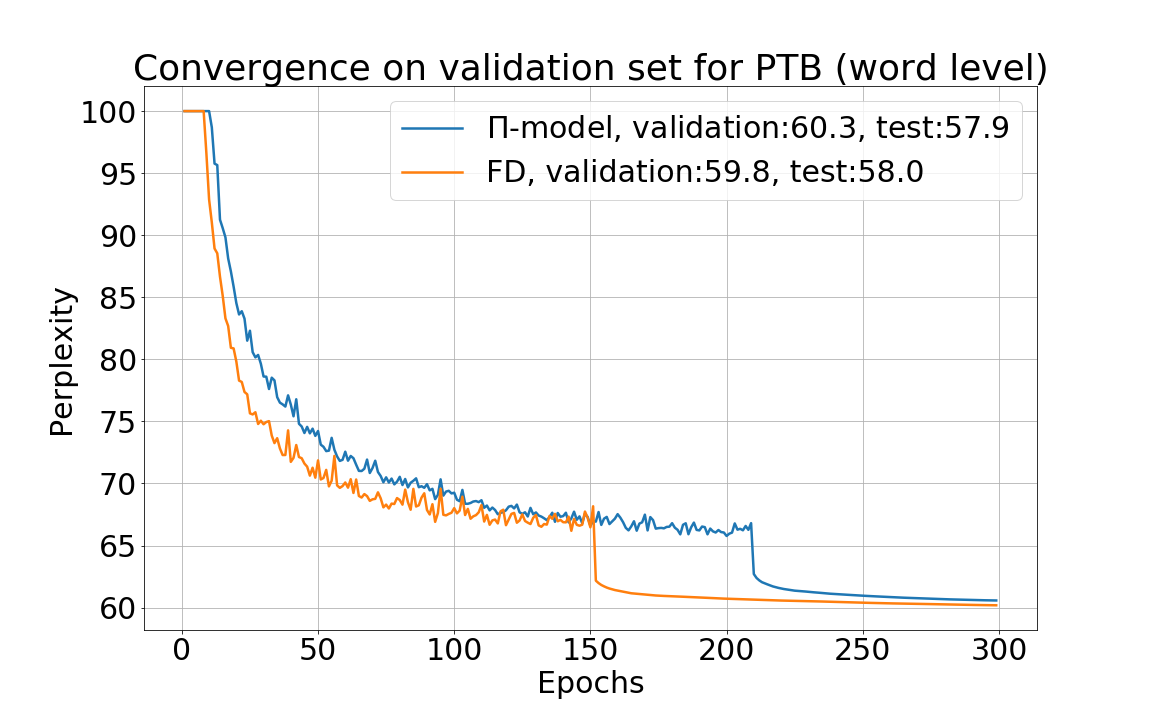}
    \caption{Ablation study: Validation perplexity on PTB word level modeling for $\Pi$-model and \emph{fraternal dropout}. We find that FD converges faster and generalizes at par.}
    \label{FDconvergence}
  \end{minipage}
  \hfill
  \begin{minipage}[b]{0.48\textwidth}
    \includegraphics[width=\textwidth]{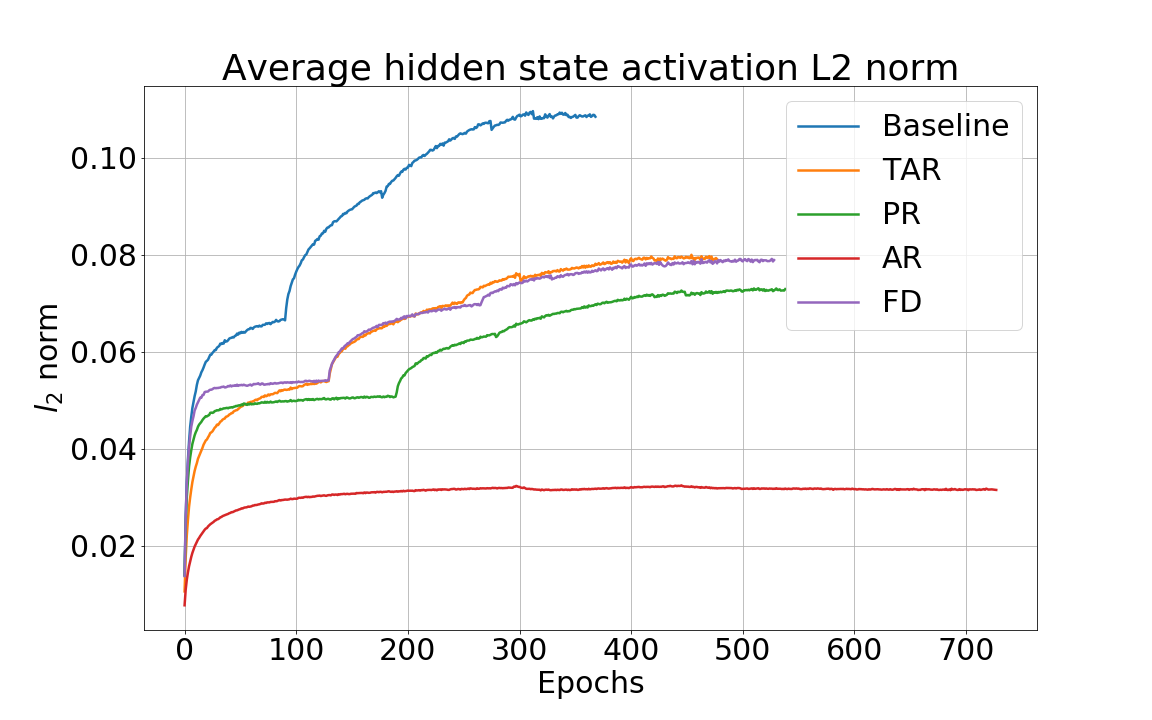}
    \caption{Ablation study: Average hidden state activation is reduced when any of the regularizer described is used. The y-axis is the value of $\frac{1}{d}\lVert\mathbf{m \cdot h}^t\rVert_2^2$.}
    \label{ARactivation}
  \end{minipage}
\vspace{-0.08cm}
\end{figure}

Even though we designed our algorithm specifically to address problems in RNNs, to have a fair comparison, we compare with $\Pi$-model on a semi-supervised task which is their goal. Specifically, we use the CIFAR-10 dataset that consists of $32 \times 32$ images from 10 classes. Following the usual splits used in semi-supervised learning literature, we use 4 thousand labeled and 41 thousand unlabeled samples for training, 5 thousand labeled samples for validation and 10 thousand labeled samples for test set. We use the original ResNet-56 \citep{DBLP:journals/corr/HeZRS15} architecture. We run grid search on $\kappa \in \{0.05, 0.1, 0.15, 0.2\}$, dropout rates in $\{0.05, 0.1, 0.15, 0.2\}$ and leave the rest of the hyper-parameters unchanged. We additionally check importance of using unlabeled data. The results are reported in Table \ref{table:CIFAR-10}. We find that our algorithm performs at par with $\Pi$-model. When unlabeled data is not used, \emph{fraternal dropout} provides slightly better results as compared to traditional dropout.

\begin{table}[t]
\vspace{-0.08cm}
\centering
\begin{tabular}{c | c c c c} 
\textbf{Model} & \textbf{Dropout rate} & \textbf{Unlabeled data} & \textbf{Validation} & \textbf{Test}\\
\hline
Traditional dropout & 0.1 & No & 78.4 ($\pm$ 0.25) & 76.9 ($\pm$ 0.31)\\
No dropout & 0.0 & No & 78.8 ($\pm$ 0.59) & 77.1 ($\pm$ 0.3)\\
\emph{Fraternal dropout} ($\kappa=0.05$)& 0.05 & No &  \textbf{79.3} ($\pm$ 0.38) & \textbf{77.6} ($\pm$ 0.35)\\
\hline
Traditional dropout + $\Pi$-model & 0.1 & Yes & 80.2 ($\pm$ 0.33) & 78.5 ($\pm$ 0.46)\\
\emph{Fraternal dropout} ($\kappa=0.15$) & 0.1 & Yes & \textbf{80.5} ($\pm$ 0.18) & \textbf{79.1} ($\pm$ 0.37)\\
\vspace{-0.08cm}
\end{tabular}
\caption{Ablation study: Accuracy on altered (semi-supervised) CIFAR-10 dataset for ResNet-56 based models. We find that our algorithm performs at par with $\Pi$-model. When unlabeled data is not used traditional dropout hurts performance while \emph{fraternal dropout} provides slightly better results. It means that our methods may be beneficial when we lack data and have to use additional regularizing methods.}
\label{table:CIFAR-10}
\end{table}

\subsection{Activity regularization and temporal activity regularization analysis} \label{subsec:ar_tar}
The authors of \cite{ar_tar} study the importance of activity regularization (AR)\footnote{We used $\lVert\mathbf{m \cdot h}^t\rVert_2^2$, where $m$ is the dropout mask, in our actual experiments with AR because it was implemented as such in the original paper's Github repository \cite{merity2017regularizing}.} and temporal activity regularization (TAR) in LSTMs given as,
\begin{align}
\mathcal{R}_{AR}(\mathbf{z}^{t}; \theta) &= \frac{\alpha}{d} \lVert \mathbf{h}^t \rVert_2^2\\
\mathcal{R}_{TAR}(\mathbf{z}^{t}; \theta) &= \frac{\beta}{d} \lVert \mathbf{h}^t - \mathbf{h}^{t-1} \rVert_2^2
\end{align}
where $\mathbf{h}^t \in \mathbb{R}^d$ is the LSTM's output activation at time step $t$ (hence depends on both current input $\mathbf{z}^{t}$ and the model parameters $\theta$). Notice that AR and TAR regularizations are applied on the output of the LSTM, while our regularization is applied on the pre-softmax output $\mathbf{p}^t(\mathbf{z}^{t}, s_i^t; \theta)$ of the LSTM. However, since our regularization can be decomposed as
\begin{align}
\mathcal{R}_{FD}(\mathbf{z}^{t}; \theta) &= \mathbb{E}_{s_i, s_j} \left[ \lVert \mathbf{p}^t(\mathbf{z}^{t}, s_i^t; \theta) - \mathbf{p}^t(\mathbf{h}^{t}, s_j^t; \theta) \rVert^2_2 \right]\\
&= \mathbb{E}_{s_i, s_j} \left[ \lVert \mathbf{p}^t(\mathbf{z}^{t}, s_i^t; \theta) \rVert^2_2  + \lVert \mathbf{p}^t(\mathbf{z}^{t}, s_j^t; \theta) \rVert^2_2 - 2 \mathbf{p}^t(\mathbf{z}^{t}, s_i^t; \theta)^T \mathbf{p}^t(\mathbf{z}^{t}, s_j^t; \theta)  \right]
\end{align}
and encapsulates an $\ell^2$ term along with the dot product term, we perform experiments to confirm that the gains in our approach is not due to the $\ell^2$ regularization alone. A similar argument goes for the TAR objective. We run a grid search on $\alpha \in \{ 1,2, \hdots,12 \}$, $\beta \in \{ 1,2, \hdots,12  \}$, which include the hyper-parameters mentioned in \cite{merity2017regularizing}. For our regularization, we use $\kappa \in \{ 0.05,0.1,\hdots,0.4 \}$. Furthermore, we also compare with a regularization (PR) that regularizes $\lVert \mathbf{p}^t(\mathbf{z}^{t}, s_i^t; \theta) \rVert_2^2$ to further rule-out any gains only from $\ell^2$ regularization. Based on this grid search, we pick the best model on the validation set for all the regularizations, and additionally report a baseline model without any of these four mentioned regularizations. The learning dynamics is shown in Figure \ref{fig_ar_tar}. Our regularization performs better both in terms of convergence and generalization compared with other methods. Average hidden state activation is reduced when any of the regularizer described is applied (see Figure \ref{ARactivation}).

\begin{figure}[t]
\vspace{-0.08cm}
  \centering
  \begin{minipage}[b]{0.48\textwidth}
    \includegraphics[width=\textwidth]{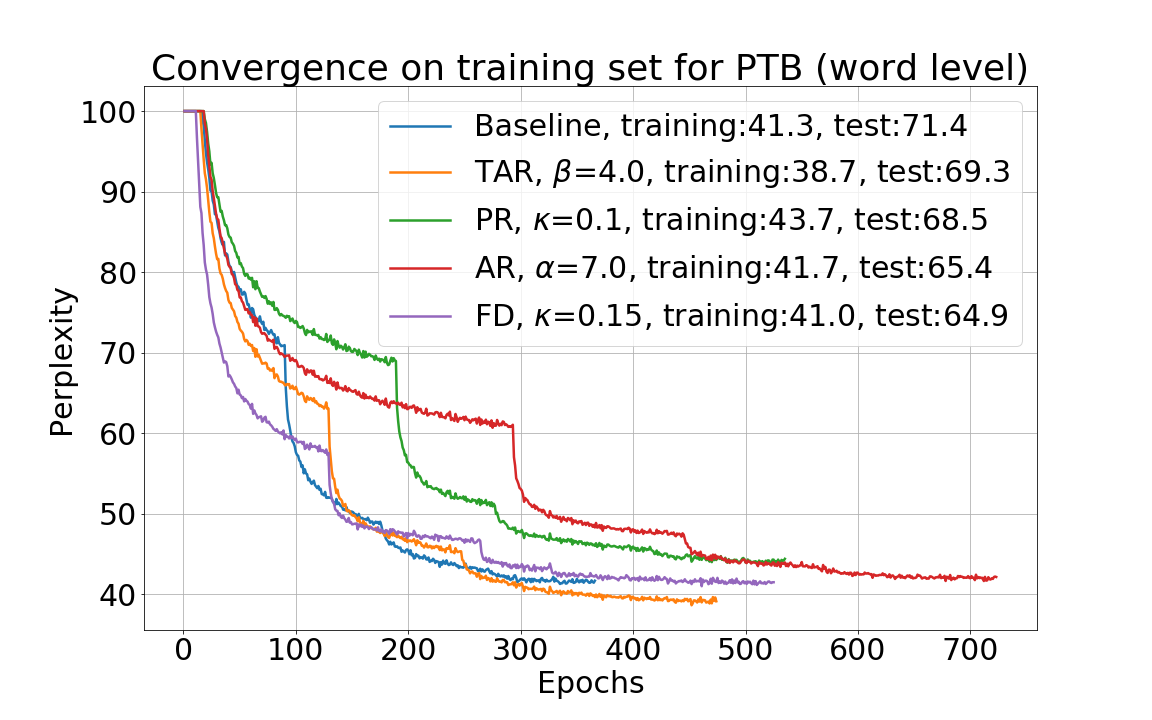}
  \end{minipage}
  \hfill
  \begin{minipage}[b]{0.48\textwidth}
    \includegraphics[width=\textwidth]{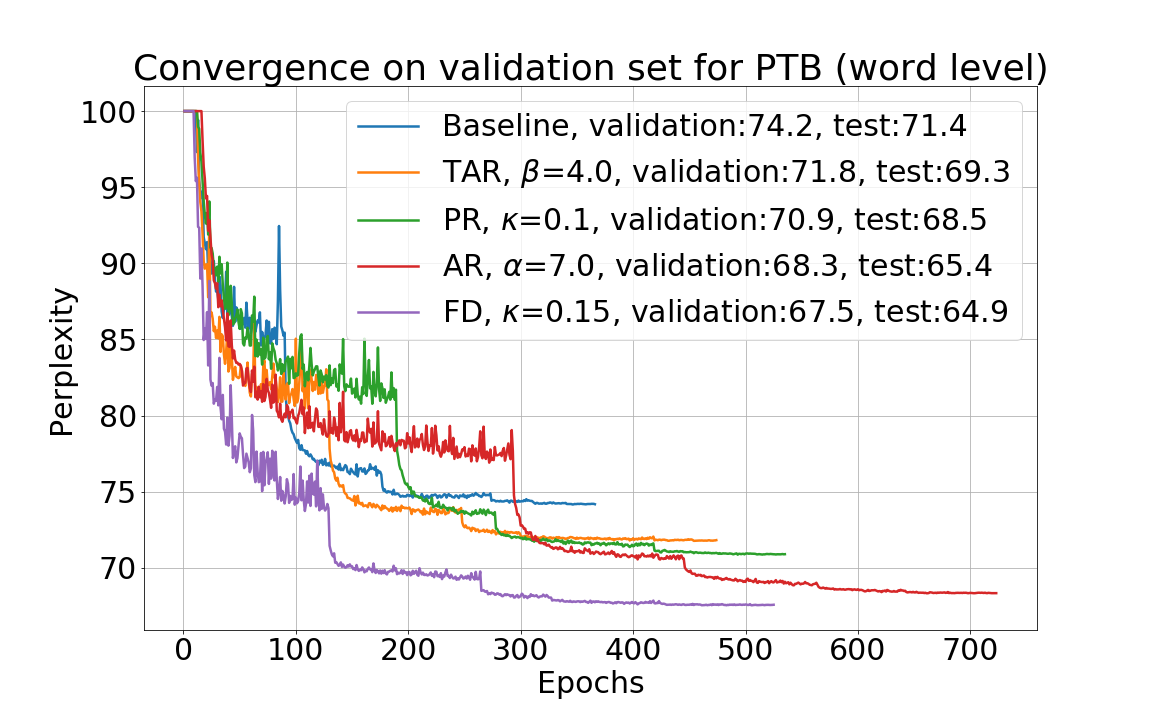}
  \end{minipage}
  \caption{Ablation study: Train (left) and validation (right) perplexity on PTB word level modeling with single layer LSTM (10M parameters). These curves study the learning dynamics of the baseline model, temporal activity regularization (TAR), prediction regularization (PR), activity regularization (AR) and \emph{fraternal dropout} (FD, our algorithm). We find that FD both converges faster and generalizes better than the regularizers in comparison.}
\label{fig_ar_tar}
\vspace{-0.08cm}
\end{figure}

\subsection{Improvements using fine-tuning}
\label{finetuning}

We confirm that models trained with \emph{fraternal dropout} benefit from the NT-ASGD fine-tuning step (as also used in \citet{merity2017regularizing}). However, this is a very time-consuming practice and since different hyper-parameters may be used in this additional part of the learning procedure, the probability of obtaining better results due to the extensive grid search is higher. Hence, in our experiments we use the same fine-tuning procedure as implemented in the official repository (even \emph{fraternal dropout} was not used). We present the importance of fine-tuning in Table \ref{table:finetuning}.

\begin{table}[t]
\vspace{-0.08cm}
\centering
\begin{tabular}{ c|c|c|c|c|c }
  \multicolumn{2}{c|}{}& \multicolumn{2}{c|}{\textbf{PTB}} & \multicolumn{2}{c}{\textbf{WT2}}\\
  \hline
  \textbf{Dropout} & \textbf{Fine-tuning} & \textbf{Validation} & \textbf{Test} & \textbf{Validation} & \textbf{Test}\\
  \hline
  Traditional & None & 60.7 & 58.8 & 69.1 & 66.0\\
  Traditional & One & 60.0 & 57.3 & 68.6 & 65.8\\
  \hline
  \emph{Fraternal} & None & 59.8 & 58.0 & 68.3 & 65.3\\
  \emph{Fraternal} & One & 58.9 & 56.8 & \textbf{66.8} & \textbf{64.1}\\
  \emph{Fraternal} & Two & \textbf{58.5} & \textbf{56.2} & -- & --\\
\end{tabular}
\caption{Ablation study: Importance of fine-tuning for AWD-LSTM 3-layer model. Perplexity for the Penn Treebank and WikiText-2 language modeling tasks.}
\label{table:finetuning}
\vspace{-0.08cm}
\end{table}

\subsection{Fraternal dropout and expectation linear dropout comparison}\label{FDvsELD}

We perform extensive grid search for the baseline model from Subsection \ref{subsec:elm} (an AWD-LSTM 3-layer architecture) trained with either \emph{fraternal dropout} or \emph{expectation linear dropout} regularizations, to further contrast the performance of these two methods. The experiments are run without fine-tuning on the PTB dataset.

In each run, all five dropout rates are randomly altered (they are set to their original value, as in \cite{merity2017regularizing}, multiplied by a value drawn from the uniform distribution on the interval $[0.5,1.5]$) and the rest of the hyper-parameters are drawn as shown in Table \ref{table:FDvsELD}. As in Subsection \ref{subsec:elm}, AR and TAR regularizers are deactivated.

Together we run more than 400 experiments. The results are presented in Table \ref{table:FDvsELDresults}. Both FD and ELD perform better than the baseline model that instead uses AR and TAR regularizers. Hence, we confirm our previous finding (see Subsection \ref{subsec:ar_tar}) that both FD and ELD are better. However, as found previously for smaller model in Subsection \ref{subsec:eld}, the convergence of FD is faster than that of ELD. Additionally, \emph{fraternal dropout} is more robust to different hyper-parameters choice (more runs performing better than the baseline and better average for top performing runs).

\begin{table}[t]
\vspace{-0.08cm}
\centering
\begin{tabular}{c | c } 
\textbf{Hyper-parameter} & \textbf{Possible values}\\
\hline
batch size & [10, 20, 30, 40]\\
non-monotone interval & [5, 10, 20, 40, 60, 100]\\
$\kappa$ -- FD or ELD strength & $U(0,0.3)$\\
weight decay & $U(0.6 \times 10^{-6} ,2.4 \times 10^{-6})$
\vspace{-0.08cm}
\end{tabular}
\caption{Ablation study: Candidate hyper-parameters possible used in the grid search for comparing \emph{fraternal dropout} and \emph{expectation linear dropout}. $U(a,b)$ is the uniform distribution on the interval $[a,b]$. For finite sets, each value is drawn with equal probability.}
\label{table:FDvsELD}

\vspace{0.65cm}
\centering
\begin{tabular}{ c|c c c c}
\textbf{Model} & \textbf{Best} & \textbf{Top5 avg} & \textbf{Top10 avg} & \textbf{Beating baseline runs (out of)}\\
\hline
\emph{Expectation linear dropout} & 59.4 & 60.1 & 60.5 & 6 (208)\\
\emph{Fraternal dropout} & 59.4 & \textbf{59.6} & \textbf{59.9} & \textbf{14} (203)\\
\end{tabular}
\caption{Ablation study: \emph{Fraternal dropout} and \emph{expectation linear dropout} comparison. Perplexity on the Penn Treebank validation dateset. \emph{Fraternal dropout} is more robust to different hyper-parameters choice as twice as much runs finished performing better than the baseline model (60.7).}
\label{table:FDvsELDresults}
\vspace{-0.08cm}
\end{table}

\section{Conclusion}
In this paper we propose a simple regularization method for RNNs called \emph{fraternal dropout} that acts as a regularization by reducing the variance in model predictions across different dropout masks. We show that our model achieves state-of-the-art results on benchmark language modeling tasks along with faster convergence. We also analytically study the relationship between our regularization and expectation linear dropout \cite{ma2016dropout}. We perform a number of ablation studies to evaluate our model from different aspects and carefully compare it with related methods both qualitatively and quantitatively.

\ificlrfinal
\section*{Acknowledgements}
\setcounter{footnote}{1}
\def\thefootnote{\fnsymbol{footnote}}
The authors would like to acknowledge the support of the following agencies for research funding and computing support: NSERC, CIFAR, and IVADO. We would like to thank Rosemary Nan Ke and Philippe Lacaille for their thoughts and comments throughout the project. We would also like to thank Stanis\l{}aw Jastrz\k{e}bski\footnote{\label{eq_con}equal contribution $\ddot\smile$} and Evan Racah\textsuperscript{\ref{eq_con}} for useful discussions.
\fi

\bibliographystyle{iclr2018_conference}
\bibliography{fraternal_lstm}

\begin{thebibliography}{30}
\providecommand{\natexlab}[1]{#1}
\providecommand{\url}[1]{\texttt{#1}}
\expandafter\ifx\csname urlstyle\endcsname\relax
  \providecommand{\doi}[1]{doi: #1}\else
  \providecommand{\doi}{doi: \begingroup \urlstyle{rm}\Url}\fi

\bibitem[Chung et~al.(2014)Chung, Gulcehre, Cho, and Bengio]{gru}
Junyoung Chung, Caglar Gulcehre, KyungHyun Cho, and Yoshua Bengio.
\newblock Empirical evaluation of gated recurrent neural networks on sequence
  modeling.
\newblock \emph{arXiv preprint arXiv:1412.3555}, 2014.

\bibitem[Cooijmans et~al.(2016)Cooijmans, Ballas, Laurent, and
  Courville]{DBLP:journals/corr/CooijmansBLC16}
Tim Cooijmans, Nicolas Ballas, C{\'{e}}sar Laurent, and Aaron~C. Courville.
\newblock Recurrent batch normalization.
\newblock \emph{CoRR}, abs/1603.09025, 2016.
\newblock URL \url{http://arxiv.org/abs/1603.09025}.

\bibitem[Gal \& Ghahramani(2016)Gal and Ghahramani]{vdropout}
Yarin Gal and Zoubin Ghahramani.
\newblock A theoretically grounded application of dropout in recurrent neural
  networks.
\newblock In \emph{Advances in neural information processing systems}, pp.\
  1019--1027, 2016.

\bibitem[Goodfellow et~al.(2013)Goodfellow, Mirza, Courville, and
  Bengio]{goodfellow2013multi}
Ian Goodfellow, Mehdi Mirza, Aaron Courville, and Yoshua Bengio.
\newblock Multi-prediction deep boltzmann machines.
\newblock In \emph{Advances in Neural Information Processing Systems}, pp.\
  548--556, 2013.

\bibitem[He et~al.(2015)He, Zhang, Ren, and Sun]{DBLP:journals/corr/HeZRS15}
Kaiming He, Xiangyu Zhang, Shaoqing Ren, and Jian Sun.
\newblock Deep residual learning for image recognition.
\newblock \emph{CoRR}, abs/1512.03385, 2015.
\newblock URL \url{http://arxiv.org/abs/1512.03385}.

\bibitem[Hochreiter \& Schmidhuber(1997)Hochreiter and Schmidhuber]{lstm}
Sepp Hochreiter and J{\"u}rgen Schmidhuber.
\newblock Long short-term memory.
\newblock \emph{Neural computation}, 9\penalty0 (8):\penalty0 1735--1780, 1997.

\bibitem[Inan et~al.(2016)Inan, Khosravi, and Socher]{inan2016tying}
Hakan Inan, Khashayar Khosravi, and Richard Socher.
\newblock Tying word vectors and word classifiers: A loss framework for
  language modeling.
\newblock \emph{arXiv preprint arXiv:1611.01462}, 2016.

\bibitem[Koehn et~al.(2007)Koehn, Hoang, Birch, Callison-Burch, Federico,
  Bertoldi, Cowan, Shen, Moran, Zens, et~al.]{koehn2007moses}
Philipp Koehn, Hieu Hoang, Alexandra Birch, Chris Callison-Burch, Marcello
  Federico, Nicola Bertoldi, Brooke Cowan, Wade Shen, Christine Moran, Richard
  Zens, et~al.
\newblock Moses: Open source toolkit for statistical machine translation.
\newblock In \emph{Proceedings of the 45th annual meeting of the ACL on
  interactive poster and demonstration sessions}, pp.\  177--180. Association
  for Computational Linguistics, 2007.

\bibitem[Krueger et~al.(2016)Krueger, Maharaj, Kram{\'a}r, Pezeshki, Ballas,
  Ke, Goyal, Bengio, Larochelle, Courville, et~al.]{zoneout}
David Krueger, Tegan Maharaj, J{\'a}nos Kram{\'a}r, Mohammad Pezeshki, Nicolas
  Ballas, Nan~Rosemary Ke, Anirudh Goyal, Yoshua Bengio, Hugo Larochelle, Aaron
  Courville, et~al.
\newblock Zoneout: Regularizing rnns by randomly preserving hidden activations.
\newblock \emph{arXiv preprint arXiv:1606.01305}, 2016.

\bibitem[Laine \& Aila(2016)Laine and Aila]{laine2016temporal}
Samuli Laine and Timo Aila.
\newblock Temporal ensembling for semi-supervised learning.
\newblock \emph{arXiv preprint arXiv:1610.02242}, 2016.

\bibitem[Laurent et~al.(2016)Laurent, Pereyra, Brakel, Zhang, and
  Bengio]{laurent2016batch}
C{\'e}sar Laurent, Gabriel Pereyra, Phil{\'e}mon Brakel, Ying Zhang, and Yoshua
  Bengio.
\newblock Batch normalized recurrent neural networks.
\newblock In \emph{Acoustics, Speech and Signal Processing (ICASSP), 2016 IEEE
  International Conference on}, pp.\  2657--2661. IEEE, 2016.

\bibitem[Ma et~al.(2016)Ma, Gao, Hu, Yu, Deng, and Hovy]{ma2016dropout}
Xuezhe Ma, Yingkai Gao, Zhiting Hu, Yaoliang Yu, Yuntian Deng, and Eduard Hovy.
\newblock Dropout with expectation-linear regularization.
\newblock \emph{arXiv preprint arXiv:1609.08017}, 2016.

\bibitem[Mandt et~al.(2017)Mandt, Hoffman, and Blei]{mandt2017stochastic}
Stephan Mandt, Matthew~D Hoffman, and David~M Blei.
\newblock Stochastic gradient descent as approximate bayesian inference.
\newblock \emph{arXiv preprint arXiv:1704.04289}, 2017.

\bibitem[Marcus et~al.(1993)Marcus, Marcinkiewicz, and
  Santorini]{marcus1993building}
Mitchell~P Marcus, Mary~Ann Marcinkiewicz, and Beatrice Santorini.
\newblock Building a large annotated corpus of english: The penn treebank.
\newblock \emph{Computational linguistics}, 19\penalty0 (2):\penalty0 313--330,
  1993.

\bibitem[Melis et~al.(2017)Melis, Dyer, and Blunsom]{melis2017state}
G{\'a}bor Melis, Chris Dyer, and Phil Blunsom.
\newblock On the state of the art of evaluation in neural language models.
\newblock \emph{arXiv preprint arXiv:1707.05589}, 2017.

\bibitem[Merity et~al.(2016)Merity, Xiong, Bradbury, and
  Socher]{DBLP:journals/corr/MerityXBS16}
Stephen Merity, Caiming Xiong, James Bradbury, and Richard Socher.
\newblock Pointer sentinel mixture models.
\newblock \emph{CoRR}, abs/1609.07843, 2016.
\newblock URL \url{http://arxiv.org/abs/1609.07843}.

\bibitem[Merity et~al.(2017{\natexlab{a}})Merity, Keskar, and
  Socher]{merity2017regularizing}
Stephen Merity, Nitish~Shirish Keskar, and Richard Socher.
\newblock Regularizing and optimizing lstm language models.
\newblock \emph{arXiv preprint arXiv:1708.02182}, 2017{\natexlab{a}}.

\bibitem[Merity et~al.(2017{\natexlab{b}})Merity, McCann, and Socher]{ar_tar}
Stephen Merity, Bryan McCann, and Richard Socher.
\newblock Revisiting activation regularization for language rnns.
\newblock \emph{arXiv preprint arXiv:1708.01009}, 2017{\natexlab{b}}.

\bibitem[Mikolov et~al.(2010)Mikolov, Karafi{\'a}t, Burget, Cernock{\`y}, and
  Khudanpur]{mikolov2010recurrent}
Tomas Mikolov, Martin Karafi{\'a}t, Lukas Burget, Jan Cernock{\`y}, and Sanjeev
  Khudanpur.
\newblock Recurrent neural network based language model.
\newblock In \emph{Interspeech}, volume~2, pp.\ ~3, 2010.

\bibitem[Polyak \& Juditsky(1992)Polyak and Juditsky]{polyak1992acceleration}
Boris~T Polyak and Anatoli~B Juditsky.
\newblock Acceleration of stochastic approximation by averaging.
\newblock \emph{SIAM Journal on Control and Optimization}, 30\penalty0
  (4):\penalty0 838--855, 1992.

\bibitem[Press \& Wolf(2016)Press and Wolf]{press2016using}
Ofir Press and Lior Wolf.
\newblock Using the output embedding to improve language models.
\newblock \emph{arXiv preprint arXiv:1608.05859}, 2016.

\bibitem[Rasmus et~al.(2015)Rasmus, Valpola, Honkala, Berglund, and
  Raiko]{ladder}
Antti Rasmus, Harri Valpola, Mikko Honkala, Mathias Berglund, and Tapani Raiko.
\newblock Semi-supervised learning with ladder network.
\newblock \emph{CoRR}, abs/1507.02672, 2015.
\newblock URL \url{http://arxiv.org/abs/1507.02672}.

\bibitem[Sajjadi et~al.(2016)Sajjadi, Javanmardi, and
  Tasdizen]{sajjadi2016regularization}
Mehdi Sajjadi, Mehran Javanmardi, and Tolga Tasdizen.
\newblock Regularization with stochastic transformations and perturbations for
  deep semi-supervised learning.
\newblock In \emph{Advances in Neural Information Processing Systems}, pp.\
  1163--1171, 2016.

\bibitem[Serdyuk et~al.(2017)Serdyuk, Ke, Sordoni, Trischler, Pal, and
  Bengio]{serdyuk2017twinnet}
Dmitriy Serdyuk, Nan~Rosemary Ke, Alessandro Sordoni, Adam Trischler, Chris
  Pal, and Yoshua Bengio.
\newblock Twin networks: Matching the future for sequence generation.
\newblock 2017.

\bibitem[Srivastava et~al.(2014)Srivastava, Hinton, Krizhevsky, Sutskever, and
  Salakhutdinov]{srivastava2014dropout}
Nitish Srivastava, Geoffrey~E Hinton, Alex Krizhevsky, Ilya Sutskever, and
  Ruslan Salakhutdinov.
\newblock Dropout: a simple way to prevent neural networks from overfitting.
\newblock \emph{Journal of machine learning research}, 15\penalty0
  (1):\penalty0 1929--1958, 2014.

\bibitem[Vinyals et~al.(2014)Vinyals, Toshev, Bengio, and
  Erhan]{DBLP:journals/corr/VinyalsTBE14}
Oriol Vinyals, Alexander Toshev, Samy Bengio, and Dumitru Erhan.
\newblock Show and tell: {A} neural image caption generator.
\newblock \emph{CoRR}, abs/1411.4555, 2014.
\newblock URL \url{http://arxiv.org/abs/1411.4555}.

\bibitem[Wan et~al.(2013)Wan, Zeiler, Zhang, Cun, and
  Fergus]{wan2013regularization}
Li~Wan, Matthew Zeiler, Sixin Zhang, Yann~L Cun, and Rob Fergus.
\newblock Regularization of neural networks using dropconnect.
\newblock In \emph{Proceedings of the 30th international conference on machine
  learning (ICML-13)}, pp.\  1058--1066, 2013.

\bibitem[Xu et~al.(2015)Xu, Ba, Kiros, Cho, Courville, Salakhutdinov, Zemel,
  and Bengio]{DBLP:journals/corr/XuBKCCSZB15}
Kelvin Xu, Jimmy Ba, Ryan Kiros, Kyunghyun Cho, Aaron~C. Courville, Ruslan
  Salakhutdinov, Richard~S. Zemel, and Yoshua Bengio.
\newblock Show, attend and tell: Neural image caption generation with visual
  attention.
\newblock \emph{CoRR}, abs/1502.03044, 2015.
\newblock URL \url{http://arxiv.org/abs/1502.03044}.

\bibitem[Zaremba et~al.(2014)Zaremba, Sutskever, and
  Vinyals]{zaremba2014recurrent}
Wojciech Zaremba, Ilya Sutskever, and Oriol Vinyals.
\newblock Recurrent neural network regularization.
\newblock \emph{arXiv preprint arXiv:1409.2329}, 2014.

\bibitem[Zilly et~al.(2016)Zilly, Srivastava, Koutn{\'\i}k, and
  Schmidhuber]{zilly2016recurrent}
Julian~Georg Zilly, Rupesh~Kumar Srivastava, Jan Koutn{\'\i}k, and J{\"u}rgen
  Schmidhuber.
\newblock Recurrent highway networks.
\newblock \emph{arXiv preprint arXiv:1607.03474}, 2016.

\end{thebibliography}

\newpage
\appendix

\section*{Appendix}
\subsection*{Monte Carlo evaluation}

A well known way to address the gap between the train and evaluation mode of dropout is to perform Monte Carlo sampling of masks and average the predictions during evaluation (MC-eval), and this has been used for feed-forward networks. Since \emph{fraternal dropout} addresses the same problem, we would like to clarify that it is not straight-forward and feasible to apply MC-eval for RNNs. In feed-forward networks, we average the output prediction scores from different masks. However, in the case RNNs (for next step predictions), there is more than one way to perform such evaluation, but each one is problematic. They are as follows:

1. \textbf{Online averaging}

Consider that we first make the prediction at time step 1 using different masks by averaging the prediction score. Then we use this output to feed as input to the time step 2, then use different masks at time step 2 to generate the output at time step 2, and so on. But in order to do so, because of the way RNNs work, we also need to feed the previous time hidden state to time step 2. One way would be to average the hidden states over different masks at time step 1. But the hidden space can in general be highly nonlinear, and it is not clear if averaging in this space is a good strategy. This approach is not justified.

Besides, this strategy as a whole is extremely time consuming because we would need to sequentially make predictions with multiple masks at each time step.

2. \textbf{Sequence averaging}

Let's consider that we use a different mask each time we want to generate a sequence, and then we average the prediction scores, and compute the argmax (at each time step) to get the actual generated sequence.

In this case, notice it is not guaranteed that the predicted word at time step $t$ due to averaging the predictions would lead to the next word (generated by the same process) if we were to feed the time step $t$ output as input to the time step $t+1$. For example, with different dropout masks, if the probability of $1^{st}$ time step outputs are: I 40\%), he (30\%), she (30\%), and the probability of the 2nd time step outputs are: am (30\%), is (60\%), was (10\%). Then the averaged prediction score followed by argmax will result in the prediction ``I is'', but this would be incorrect. A similar concern applies for output predictions varying in temporal length.

Hence, this approach can not be used to generate a sequence (it has to be done by by sampling a mask and generating a single sequence). However, this approach may be used to estimate the probability assigned by the model to a given sequence.

Nonetheless, we run experiments on the PTB dataset using MC-eval (the results are summarized in Table \ref{table:MC-eval}). We start with a simple comparison that compares \emph{fraternal dropout} with the averaged mask and the AWD-LSTM 3-layer baseline with a single fixed mask that we call MC1. The MC1 model performs much worse than \emph{fraternal dropout}. Hence, it would be hard to use MC1 model in practice because a single sample is inaccurate. We also check MC-eval for a larger number of models (MC50) (50 models were used since we were not able to fit more models simultaneously on a single GPU). The final results for MC50 are worse than the baseline which uses the averaged mask. For comparison, we also evaluate MC10. Note that no fine-tuning is used for the above experiments.

\begin{table}[ht]
\vspace{-0.08cm}
\centering
\begin{tabular}{c | c c} 
\textbf{Model} & \textbf{Validation} & \textbf{Test}\\
\hline
MC1 & 92.2 ($\pm$ 0.5) & 89.2 ($\pm$ 0.5)\\
MC10 & 66.2 ($\pm$ 0.2) & 63.7 ($\pm$ 0.2)\\
MC50 & 64.4 ($\pm$ 0.1) & 62.1 ($\pm$ 0.1)\\
Baseline (average mask) & 60.7 & 58.8\\
\emph{Fraternal dropout} & \textbf{59.8} & \textbf{58.0}\\
\vspace{-0.08cm}
\end{tabular}
\caption{Appendix: Monte Carlo evaluation. Perplexity on Penn Treebank word level language modeling task using Monte Carlo sampling, \emph{fraternal dropout} or average mask.}
\label{table:MC-eval}
\end{table}

\newpage
\subsection*{Reasons for focusing on RNNs}
The \emph{fraternal dropout} method is general and may be applied in feed-forward architectures (as shown in Subsection \ref{semisupervised} for CIFAR-10 semisupervised example). However, we believe that it is more powerful in the case of RNNs because:
\begin{enumerate}
\item Variance in prediction accumulates among time steps in RNNs and since we share parameters for all time steps, one may use the same $\kappa$ value at each step. In feed-forward networks the layers usually do not share parameters and hence one may want to use different $\kappa$  values for different layers (which may be hard to tune). The simple way to alleviate this problem is to apply the regularization term on the pre-softmax predictions only (as shown in the paper) or use the same $\kappa$  value for all layers. However, we believe that it may limit possible gains.
\item The best performing RNN architectures (state-of-the-art) usually use some kind of dropout (embedding dropout, word dropout, weight dropout etc.), very often with high dropout rates (even larger than 50\% for input word embedding in NLP tasks). However, this is not true for feed-forward networks. For instance, ResNet architectures very often do not use dropout at all (probably because batch normalization is often better to use). It can be seen in the paper (Subsection \ref{semisupervised}, semisupervised CIFAR-10 task) that when unlabeled data is not used the regular dropout hurts performance and using fraternal dropout seems to improve just a little.
\item On a final note, the Monte Carlo sampling (a well known method that adresses the gap betweem the train and evaluation mode of dropout) can not be easily applied for RNNs and \emph{fraternal dropout} may be seen as an alternative.
\end{enumerate}

To conclude, we believe that when the use of dropout benefits in a given architecture, applying \emph{fraternal dropout} should improve performance even more.

As mentioned before, in image recognition tasks, one may experiment with something what we would temporarily dub \emph{fraternal augmentation} (even though dropout is not used, one can use random data augmentation such as random crop or random flip). Hence, one may force a given neural network to have the same predictions for different augmentations.

\newpage
\subsection*{Proofs}
\setcounter{remark}{0}
\setcounter{proposition}{0}

\begin{remark}
Let $s_i^t$ and $s_j^t$ be i.i.d. dropout masks and $\mathbf{p}^t(\mathbf{z}^t, s_i^t; \theta) \in \mathbb{R}^m$ be the prediction function as described above. Then, 
\begin{align}
\mathcal{R}_{FD}(\mathbf{z}^{t}; \theta) &= \mathbb{E}_{s_i^t, s_j^t} \left[ \lVert \mathbf{p}^t(\mathbf{z}^t, s_i^t; \theta) - \mathbf{p}^t(\mathbf{z}^t, s_j^t; \theta) \rVert^2_2 \right] = 2\sum_{q=1}^{m} \var_{s_i^t}( {p}_q^t(\mathbf{z}^t, s_i^t; \theta) ).
\end{align}
\end{remark}
\begin{proof}
For simplicity of notation, we omit the time index $t$.
\begin{align}
\mathcal{R}_{FD}(\mathbf{z}; \theta) &= \mathbb{E}_{s_i, s_j} \left[ \lVert \mathbf{p}(\mathbf{z}, s_i; \theta) - \mathbf{p}(\mathbf{z}, s_j; \theta) \rVert^2_2 \right] \\
&= \mathbb{E}_{s_i}\left[ \lVert \mathbf{p}(\mathbf{z}, s_i; \theta) \rVert^2_2 \right]  + \mathbb{E}_{s_j}\left[ \lVert \mathbf{p}(\mathbf{z}, s_j; \theta) \rVert^2_2 \right] \\ \nonumber
&- 2 \mathbb{E}_{s_i, s_j}\left[  \mathbf{p}(\mathbf{z}, s_i; \theta)^T \mathbf{p}(\mathbf{z}, s_j; \theta)\right] \\
&= 2\sum_{q=1}^{m} \left( \mathbb{E}_{s_i}\left[ {p}_q(\mathbf{z}, s_i; \theta)^2 \right]  - \mathbb{E}_{s_i, s_j}\left[  {p}_{q}(\mathbf{z}, s_i; \theta) {p}_{q}(\mathbf{z}, s_j; \theta)\right] \right) \\
&= 2\sum_{q=1}^{m} \left( \mathbb{E}_{s_i}\left[ {p}_q(\mathbf{z}, s_i; \theta)^2 \right]  - \mathbb{E}_{s_i}\left[ {p}_{q}(\mathbf{z}, s_i; \theta) \right] \mathbb{E}_{s_j}\left[ {p}_{q}(\mathbf{z}, s_i; \theta) \right] \right) \\
&= 2\sum_{q=1}^{m} \left( \mathbb{E}_{s_i}\left[ {p}_q(\mathbf{z}, s_i; \theta)^2 \right]  - \mathbb{E}_{s_i}\left[  {p}_{q}(\mathbf{z}, s_i; \theta) \right]^2 \right) \\
&= 2\sum_{q=1}^{m} \var_{s_i}( {p}_q(\mathbf{z}, s_i; \theta) ).
\end{align}
\end{proof}

\begin{proposition}
$\mathcal{R}_{FD}(\mathbf{z}^{t}; \theta) \leq 4 \mathcal{\tilde{R}}_{ELD}(\mathbf{z}^{t}; \theta)$.
\end{proposition}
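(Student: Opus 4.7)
The plan is to insert the deterministic reference point $\mathbf{p}^t(\mathbf{z}^t, \mathbb{E}_s[s]; \theta)$ inside the squared norm defining $\mathcal{R}_{FD}$, split the resulting expression using the elementary inequality $\lVert a+b\rVert_2^2 \leq 2\lVert a\rVert_2^2 + 2\lVert b\rVert_2^2$ (which is just a restatement of the triangle inequality together with AM--QM, or equivalently a direct consequence of expanding $\lVert a-b\rVert_2^2 \geq 0$), and then recognize each of the two resulting pieces as a copy of $\mathcal{\tilde{R}}_{ELD}$.

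Concretely, I would first write, for fixed masks $s_i^t, s_j^t$ and with $\bar{s} := \mathbb{E}_s[s]$,
\begin{equation*}
\mathbf{p}^t(\mathbf{z}^t, s_i^t; \theta) - \mathbf{p}^t(\mathbf{z}^t, s_j^t; \theta) = \bigl(\mathbf{p}^t(\mathbf{z}^t, s_i^t; \theta) - \mathbf{p}^t(\mathbf{z}^t, \bar{s}; \theta)\bigr) + \bigl(\mathbf{p}^t(\mathbf{z}^t, \bar{s}; \theta) - \mathbf{p}^t(\mathbf{z}^t, s_j^t; \theta)\bigr),
\end{equation*}
apply $\lVert a+b\rVert_2^2 \leq 2\lVert a\rVert_2^2 + 2\lVert b\rVert_2^2$ to the right-hand side, and take the expectation over the independent masks $s_i^t, s_j^t$. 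Because the two summands depend on only one of the two masks each (the reference point $\bar{s}$ is deterministic), the expectation factorizes and each term reduces to $\mathbb{E}_{s}[\lVert \mathbf{p}^t(\mathbf{z}^t, s; \theta) - \mathbf{p}^t(\mathbf{z}^t, \bar{s}; \theta)\rVert_2^2] = \mathcal{\tilde{R}}_{ELD}(\mathbf{z}^t; \theta)$, yielding the desired bound $\mathcal{R}_{FD} \leq 2\mathcal{\tilde{R}}_{ELD} + 2\mathcal{\tilde{R}}_{ELD} = 4\mathcal{\tilde{R}}_{ELD}$.

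There is essentially no obstacle in this proof: the only ``choice'' is picking the right anchor point to add and subtract, and $\mathbf{p}^t(\mathbf{z}^t, \bar{s}; \theta)$ is the obvious one since it is precisely the quantity that appears in $\mathcal{\tilde{R}}_{ELD}$. The factor $4$ in the statement is the tell-tale sign that a $2a^2 + 2b^2$ split is being used twice, which confirms the approach. The whole argument is a two-line computation once the anchor-point trick is spotted.
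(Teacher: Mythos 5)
Your proposal is correct and follows essentially the same route as the paper: the same anchor point $\mathbf{p}^t(\mathbf{z}^t, \bar{s}; \theta)$ is inserted, and your elementary inequality $\lVert a+b\rVert_2^2 \leq 2\lVert a\rVert_2^2 + 2\lVert b\rVert_2^2$ is exactly the Jensen's-inequality step the paper uses (written there as $\lVert \frac{a}{2}+\frac{b}{2}\rVert_2^2 \leq \frac{1}{2}\lVert a\rVert_2^2 + \frac{1}{2}\lVert b\rVert_2^2$ with the factor $4$ pulled out front). The identification of each resulting term with $\mathcal{\tilde{R}}_{ELD}$ via the independence of the two masks also matches the paper's final step.
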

\begin{proof}
Let $\bar{s} := \mathbb{E}_{s}[s]$, then
\begin{align}
\mathcal{R}_t(\mathbf{z}^t) &:= \mathbb{E}_{s_i^t, s_j^t} \left[ \lVert \mathbf{p}^t(\mathbf{z}^t, s_i^t; \theta) - \mathbf{p}^t(\mathbf{z}^t, s_j^t; \theta) \rVert^2_2 \right] \\
&= \mathbb{E}_{s_i^t, s_j^t} \left[ \lVert \mathbf{p}^t(\mathbf{z}^t, s_i^t; \theta) -\mathbf{p}^t(\mathbf{z}^t, \bar{s}; \theta) + \mathbf{p}^t(\mathbf{z}^t, \bar{s}; \theta) - \mathbf{p}^t(\mathbf{z}^t, s_j^t; \theta) \rVert^2_2 \right]\\
&= 4 \mathbb{E}_{s_i^t, s_j^t} \left[ \lVert \frac{\mathbf{p}^t(\mathbf{z}^t, s_i^t; \theta) -\mathbf{p}^t(\mathbf{z}^t, \bar{s}; \theta)}{2} + \frac{\mathbf{p}^t(\mathbf{z}^t, \bar{s}; \theta) - \mathbf{p}^t(\mathbf{z}^t, s_j^t; \theta)}{2} \rVert^2_2 \right].
\end{align}
Then using Jensen's inequality,
\begin{align}
\mathcal{R}_t(\mathbf{z}^t)  &\leq 4\mathbb{E}_{s_i^t, s_j^t} \left[ \frac{1}{2} \lVert \mathbf{p}^t(\mathbf{z}^t, s_i^t; \theta) -\mathbf{p}^t(\mathbf{z}^t, \bar{s}; \theta) \rVert^2_2 + \frac{1}{2} \lVert \mathbf{p}^t(\mathbf{z}^t, \bar{s}; \theta) - \mathbf{p}^t(\mathbf{z}^t, s_j^t; \theta) \rVert^2_2 \right]\\
&=  4\mathbb{E}_{s_i^t} \left[ \lVert \mathbf{p}^t(\mathbf{z}^t, s_i^t; \theta)- \mathbf{p}^t(\mathbf{z}^t, \bar{s}; \theta)  \rVert^2_2 \right] = 4 \mathcal{\tilde{R}}_{ELD}(\mathbf{z}^{t}; \theta).
\end{align}
\end{proof}

\end{document}